\def\X{\mathbf X}
\def\inv{^{-1}}
\def\H{\mathbf H}
\def\x{\mathbf x}
\def\xs{\underline {\mathbf x}}
\def\Xs{\underline { \mathbf X} }
\def\r{\mathbf r}
\def\y{\mathbf y}
\def\emax{\mathbf{e}_\mathrm{max}}
\def\emin{\mathbf{e}_\mathrm{min}}
\def\bbeta{\boldsymbol \beta}
\def\intinf{\int_\infty^\infty}
\def\t{^\top}
\def\eps{\boldsymbol \varepsilon}
\def\E{\mathbb E}
\def\V{\mathbb V}
\def\N{\mathcal N}
\DeclareMathOperator*{\argmax}{argmax~} % for subscripts and superscripts
\DeclareMathOperator*{\argmin}{argmin~} % for subscripts and superscripts
\DeclareMathOperator{\sign}{sign} % for functions
\def \real{\rm I\!R}
\newtheorem{theorem}{Theorem}
\newtheorem{corollary}{Corollary}
\newtheorem{proof}{Proof}
\def\BibTeX{{\rm B\kern-.05em{\sc i\kern-.025em b}\kern-.08em
    T\kern-.1667em\lower.7ex\hbox{E}\kern-.125emX}}
\begin{document}

\title{Active Learning for High-Dimensional Binary Features
}

\author{\IEEEauthorblockN{Ali Vahdat}
\IEEEauthorblockA{\textit{Huawei Noah's Ark Lab} \\
Montreal, Canada \\
ali.vahdat@huawei.com}
\and
\IEEEauthorblockN{Mouloud Belbahri}
\IEEEauthorblockA{\textit{Huawei Noah's Ark Lab} \\
Montreal, Canada \\
mouloud.belbahri@huawei.com}
\and
\IEEEauthorblockN{Vahid Partovi Nia} 
\IEEEauthorblockA{\textit{Huawei Noah's Ark Lab} \\
Montreal, Canada \\
vahid.partovinia@huawei.com}
}

\maketitle

\begin{abstract}
Erbium-doped fiber amplifier (EDFA) is an optical amplifier/repeater device used to boost the intensity of optical signals being carried through a fiber optic communication system. A highly accurate EDFA model is important because of its crucial role in optical network management and optimization. The input channels of an EDFA device are treated as either on or off, hence the input features are binary. Labeled training data is very expensive to collect for EDFA devices, therefore we devise an active learning strategy suitable for binary variables to overcome this issue. We propose to take advantage of sparse linear models to simplify the predictive model. This approach simultaneously improves prediction and accelerates active learning query generation. We show the performance of our proposed active learning strategies on simulated data and real EDFA data.
\end{abstract}
\begin{IEEEkeywords}
Active Learning, EDFA, Exponential Family, Binary Features, BIC
\end{IEEEkeywords}

\section{Introduction}
\label{sect:intro}

We start by introducing the Erbium-doped fiber amplifier (EDFA) device, and subsequently review some of the works in the literature on active learning.

\subsection{EDFA}
\label{sect:edfa_background}

The EDFA equipment is an optical repeater/amplifier device to boost the intensity of optical signals through optical fiber. A highly accurate EDFA model is critical for a number of different reasons, such as: i) to improve performance of light path setup, ii) to calculate optical signal to noise ratio (OSNR), and iii) to predict the path performance. However, collecting labeled data from EDFA devices is expensive, involving an expert technician in lab environment to play with the device and collect and record the input-output signal levels. This is where active learning (AL) strategies is used to collect more data to improve the accuracy of the EDFA model.

For an EDFA equipment the input signal is received at a \textit{channel}'s input and the amplified signal leaves the same channel's output. A typical EDFA device supports between 40 and 128 channels depending on the manufacturer and type. Each channel can carry the optical signal for a different service, but not all channels carry service signal at all times. Channels carrying service signals are interpreted as \emph{on} and others are interpreted as dummy or \emph{off} channels. Therefore input signals can be deemed as binary or $x=\{-1, 1\}$. Rather than the actual strength of the channel output, we are interested in the channel gain which is $$y_c = \mathrm{gain}_c = \mathrm{output}_c - \mathrm{input}_c$$ where $c$ is the channel index, $c \in \{1, \ldots, C\}$, ($C$ is the number of channels for a given EDFA device). Therefore, $y$ is a continuous random variable. 

Here, our objective is to use active learning to improve performance of a simple model for a single EDFA channel. Channel outputs are independent given channel inputs, so the generalization towards multivariate output is straightforward.

\subsection{Active Learning}
\label{sect:al_background}

State-of-the-art machine learning (ML) algorithms require an unprecedented amount of data to learn a useful model. Although there is access to a huge amount of data, most of the available data are unlabeled, and labeling them are often time-consuming and/or expensive. This gives rise to a category of ML algorithms that identify the most promising data subset to improve model performance. A data point selected to be inquired about its label is usually referred to as a \emph{query}, and the entity providing the label for the queried data point is usually called an \emph{oracle}. Oracle could be a human, a database, or a software providing the label for the query.

ML algorithms are capable of achieving better performance if the learning algorithm is involved in the process of selecting the data points it is trained on. This is the main objective of AL methods. AL-based methods usually achieve this enhanced performance by selecting the data points they deem more useful for training, based on some form of i) uncertainty measure i.e. using the data points where the ML algorithm is most uncertain about or ii) some form of data representativeness, i.e. using the data points that are good representatives of the data distribution, see \cite{settles2012activeBook} for details.

Depending on the type of data, there are two main variations of AL algorithms; stream-based and pool-based. In stream-based AL the learning algorithm, e.g. a classifier, has access to each unlabeled data point sequentially for a short period of time. The AL algorithm determines whether to request a query or discard the request \cite{cohn1996active}. In pool-based AL \cite{lewis1994sequential}, the learning algorithm e.g. a classifier, has access to the pool of all unlabeled data. At each iteration the algorithm queries the label of an unlabeled data point from the oracle. Our proposed method falls within this category, where most AL research has been focused. Common AL algorithms improve a classifier algorithm, devised for data with continuous features and discrete response. Motivated with the EDFA application, we develop an AL algorithm for data with discrete features and a continuous response.

Methods using \emph{uncertainty sampling} \cite{sharma2013most, ramirez2014anytime} query data points with the highest uncertainty. After observing the a new point in the uncertain region, the learning algorithm becomes more confident about the neighboring subspace of the queried data point. The query strategy maintains the exploration-exploitation trade-off \cite{osugi2005balancing}. In a classification task entropy is used as the uncertainty measure. However, motivated from support vector machines, some authors define uncertainty through the decision boundary  \cite{tong2001support, baram2004online}. For regression tasks prediction variance is the common uncertainty measure. Methods based on universal approximators, such as neural networks, lack analytical form for prediction variance, so empirical variance of the prediction is used instead. 

Methods that focus on a single criteria to select a query often limit the active learning performance. AL algorithms are often trapped due to the sampling bias. Therefore an exploration-exploitation method with a large proportion of random sampling during the early queries is adopted. Some authors also consider combining different criteria \cite{cebron2009active, bondu2010exploration} or selecting the strategies adaptively for a better performance. \cite{baram2004online, hsu2015active, chu2016activeTransfer} perform adaptive strategy selection by connecting the selection problem to multi-arm bandit methods. \cite{baram2004online} uses unlabeled data points as arms (slot machines), whereas \cite{hsu2015active} uses AL strategies as arms in the bandit problem. 

In \cite{konyushkova2017learning} authors train a regressor that predicts the expected error reduction for a candidate data point in a given learning state. The experience from previous AL outcomes is utilized to learn strategies for query selection. \cite{ali2014active} proposes to train multiple models along with the active learning process. They construct two sets simultaneously; a biased training set that improves the accuracy of individual models, and an unbiased validation set that helps to select the best model. \cite{sabharwal2016incremental} automatically selects a model, tunes its hyperparameters, evaluates models on a small set of data, and gradually expands the set if the model is promising.

Section~\ref{sect:background} lays out the modeling and query generation. Section~\ref{sect:experimental} provides results on simulated data, which gives us the insight we need to apply our active learning strategies to the real data.

\section{Methodology}
\label{sect:background}

A typical pool-based AL algorithm has access to a small pool of labeled data\footnote{Note that we show univariate variables with lowercase letters, e.g. $y$, vectors with bold lowercase letters, e.g. $\x$, and matrices with bold uppercase letters, e.g. $\X$.}, 
$$D_L=\{(\x_1, y_1), (\x_2, y_2), \ldots, (\x_n, y_n)\},$$ where $\x_i\in \real^p$ is the predictor and $y_i \in \real$ is the response. Also, there is a potentially larger pool of unlabeled data $$D_U=\{\x_{n+1}, \x_{n+2}, \ldots, \x_m\}.$$ 

In the EDFA application $\x$ is the set of $p$ input channels, and $y$ is one of the output channels selected for modeling. Output channels are conditionally independent, which allows to model each output channel independently.

A typical AL algorithm starts by training a model using the labeled pool $D_L$. Then, at each iteration an AL strategy select a promising data point $\x_i$ from the unlabeled pool $D_U$ and queries its label $y_i$. Once label is retrieved for $\x_i$, this data point is removed from $D_U$ and $(\x_i, y_i)$ is added to $D_L$. The classifier now is trained on the new pool $D_L$, including the recently added $(\x_i, y_i)$. This process is repeated until a termination criteria -- usually a sampling budget $T$ -- is reached. With a small sampling budget $T$, the goal of AL is to find the best sequence of data points to be queried in order to maximize the average test accuracy of the model.

%\subsection{Linear Model}
\label{sect:linear}

Suppose the response variable observations come from a distribution in the exponential family with canonical link. Its probability density function is defined as

\begin{align}
    f(y_i \mid \eta_i, \phi) = \mathrm{exp} \Big( \frac{y_i \eta_i - b(\eta_i)}{a_i(\phi)} + c(y_i, \phi) \Big).
    \label{eq:exp_fam}
\end{align}

Here, $\eta_i$ and $\phi$ are location and scale parameters. The functions $a_i(.), b(.)$ and $c(.)$ are known. 

Motivated from generalized linear models, one may introduce a link function $g$ and focus on modelling 
$$ \eta_i = g(\mu_i) = \x_i^{\top} \bbeta,$$

where $\mu_i = \mathbb{E}(y_i)$ is the dependent variable's mean, $\x_i$ is a $p$-dimensional vector of predictors and $\bbeta$ is the $p$-dimensional vector of coefficients. It can be shown that if $y_i$ has a distribution in the exponential family, then

\begin{align*}
    &\mathbb{E}(y_i) = \mu_i = \frac{\partial b(\eta_i)}{\partial \eta_i}, \\
    &\mathbb{V}(y_i) = \sigma_i^2 = \frac{\partial^2 b(\eta_i)}{\partial^2 \eta_i} a_i(\phi).
\end{align*}

For an EDFA equipment, $y$ is a continuous random variable. Therefore, one can model the relationship between $y$ and $\x$ with a Gaussian distribution. The Gaussian distribution with mean $\mu_i$ and variance $\sigma^2$ is part of the exponential family with a linear link function $g$ such as
$$ \eta_i = g(\mu_i) = \mu_i = \x_i^{\top} \bbeta,$$
and $b(\eta_i) = \frac{1}{2} \eta_i^2$, $a_i(\phi) = \phi$, and $\phi = \sigma^2$. In this case, the generalized linear model falls into the linear regression context.

%Suppose the response variable is measured with an additive statistical error $\varepsilon$ and the relationship between the response and the predictors is fully-determined by a smooth function $f$
%$$y_i=f(\x_i)+\varepsilon_i.$$ 

%Initially suppose $f$ is a linear function $$y_i = \x_i\t\bbeta +\varepsilon_i.$$
There is a strong reason to start with a linear model. A linear model with interactions fully describes any complicated model built over discrete features, suitable for the EDFA data setting. 

%Motivated from linear regression, one may re-write the model in matrix form 
%\begin{align}
%	\y=\X \bbeta+\eps,
%	\label{eq:lm}
%\end{align}
%where $\y_{n \times 1}$ is the vector of observed response, $\X_{n \times p}$ is row-wise stacked matrix of predictors, $\bbeta_{p \times 1}$ is the $p$-dimensional vector of coefficients, and $\eps_{n \times 1}$ is white noise with zero mean and a constant variance $\sigma^2$. 

The coefficients $\bbeta$ are unknown in practice, and are estimated using least squares  
$$\hat \y = \X \hat \bbeta, \textrm{~~where~~} \hat\bbeta = (\X\t\X)\inv \X\t\y,$$
where $\y_{n \times 1}$ is the vector of observed response, $\X_{n \times p}$ is row-wise stacked matrix of predictors. Therefore, 
$$\hat\y = \X(\X\t\X)\inv \X\t \y = \H\y,$$
where $\H$ is the projection matrix.

In ultra high-dimensional settings ($p \gg n$) where most feature selection methods fail computationally, it is suggested to order the features $\x$ with a simple measure of dependence like Pearson correlation and select some of relevant features. This simplifies the ultra high-dimensional setting to a high-dimensional setting \cite{FanLv_SURE_2008} where $p \sim n$ and feature selection methods are computationally feasible. In AL, ultimately, a query is generated with an estimated model dimension $m\ll p$.

\subsection{Feature ordering}

In active learning for EDFA, model building starts with small number of observations $n$, say $n\approx 20$. If the feature dimension $p\gg 40$, least squares estimate of coefficients $\hat\bbeta$ are ill conditioned, because $\X\t\X$ is rank-deficient. Regularization, feature selection, dimension reduction, are common methods to resolve this problem. Here we focus on sparse estimation of the coefficients often implemented by $L_1$ regularization. Sparse estimation selects only a small subset of features to predict the response. Active learning requires to re-estimate the model after each new observation is added, and feature selection significantly accelerates frequent model updates.

However, $L_1$ regularization is still computationally challenging for large $p\gg n$. \cite{FanLv_SURE_2008} recommends \emph{sure} screening to pre-select a subset of features with a large absolute correlation (with the response), and then to run $L_1$ regularization on this subset. They show this subset selection keeps important features with a high probability. 

Therefore, the $L_1$ regularization is run over \emph{sure} pre-selected features, to reduce the dimensionality from order of $p\gg n$ to $p\sim n$. This dimension reduction is fast and requires only $O(np)$ operations to compute the correlations, and $O(n\log n)$ to order them. The total computation complexity of \emph{sure} screening is $O(p n \log n)$.

Once pre-selected features are chosen, an $L_1$ regularization method is used to choose the number of features in the model. The $L_1$ regularized regression \emph{lasso} (least absolute shrinkage and selection operator) by \cite{Tibshirani_lasso_1996} solves
{\small
\begin{eqnarray}\label{eq:lasso}
\ell(\bbeta\mid \lambda) &=& \frac{1}{2}(\y-\X\bbeta)\t(\y-\X\bbeta) + \lambda \sum_{j=1}^p |\beta_j|, \\  && \lambda\in \real, \bbeta\in\real^p.\nonumber \\
\hat{\bbeta}\mid \lambda &=& \argmin_{\bbeta} \ell(\bbeta\mid\lambda)\nonumber
\end{eqnarray}
}
Setting the regularization constant $\lambda=0$ returns the least squares estimates which performs no shrinking and no selection. If $\X\t\X$ is diagonal, the \emph{lasso} reduces to soft-thresholding, a common computationally fast estimation method in compressed sensing.

For a given $\lambda>0$ the regression coefficients $\hat\bbeta$ are shrunk towards zero, and some of them are set to zero (sparse selection) similar to soft-thresholding. However, the fitting algorithm is more challenging as $\X\t\X$ is not diagonal, which is the case of EDFA data. \cite{friedman2010regularization} proposed a fast coordinate descent method to fit \eqref{eq:lasso} for a given $\lambda>0$. In practice an appropriate value of $\lambda$ is unknown, and cross-validation is used  over a grid to search for a convenient regularization constant. Choosing appropriate $\lambda$ using cross-validation does not provide sparse consistent models. Even does not guarantee estimation consistency, and moreover, is computationally expensive. 

\cite{shao_penalization_1996} showed sparsity and parameter consistency do not coincide for $L_1$ regularized regression such as the \emph{lasso}. Two approaches are suggested to address this issue; i) estimate the model dimension $m<p$ consistently, and use the estimated model dimension to re-estimate the regression parameter $\bbeta$, or ii) use a non-convex regularization such as the \emph{scad} of \cite{FanLi_SCAD_2001}. Here we use the first approach and estimate the model dimension contently, and then refit the model with non-zero parameters to recover regression parameter consistency. We avoid cross-validation because it is i) inconsistent, ii) computationally challenging.  Instead we derive the predictive distribution, also called the \emph{evidence} which is known to be sparse consistent \cite{shao_penalization_1996}. We show the predictive distribution of regression with certain Gaussian prior mimics the Bayesian information criterion (BIC). The BIC of \cite{Schwarz_BIC_1978} is derived under asymptotic approximation, but our predictive distribution is also valid for small sample sizes, suitable for AL setting and EDFA data.  

The \emph{lar} (least angle regression) algorithm \cite{Efronetal_lar_2004} computes the \emph{lasso} with some minor modifications,  but its implementation is a lot faster, specially if $\lambda$ is unknown. However, even \emph{lar} for  $p\gg n$ is slow. This is why we recommend to pre-select using \emph{sure} screening and feed the selected features to \emph{lar} algorithm. With reasonable dimension $p\approx n$ \emph{lar} method is fast. The \emph{lar} algorithm efficiently computes the path of $\hat\bbeta(\lambda_j)$ over a sequence of $\lambda_j$ that the parameter dimension changes. The \emph{lar} algorithm finds the path of $\lambda_j$ and individual estimates $\hat\bbeta\mid\lambda_j, j=1,\ldots, p$, with the same computational complexity of a single least square.

\subsection{Feature selection}

In a linear model with $p$ covariates, there are $2^p$ candidate models. 
Choosing the model dimension and choosing one of the $\lambda_j$'s are inter-related. The choice of model dimension is an integer value $m\in \{1,\ldots,p\}$. The length of sequence of $\lambda_j \approx \min (n, p)$. So one can choose a value $\lambda_j$, and evaluate the model for the effective dimension imposed by that $\lambda_j$. Repeating the same process for all model dimensions and picking the best model dimension $m$ from the $p$ candidate models is wise, we escape from evaluating $2^p$ candidate models and reduce it to only $p$ model evaluation. This approach is well-known as \emph{two-stage} selection, which guarantees the statistical consistency of model dimension, and also the statistical consistency of the estimated parameters simultaneously. 

Select a value $\lambda_j$ and use its corresponding nonzero $\hat\bbeta(\lambda_j)$ to create a new design matrix $\X_j$ with dimension $n\times m$. The best model is chosen by maximizing the predictive log likelihood $\ell_j$, i.e., the best model dimension is  

$$m=\argmax_{j} \ell_j, \quad j \in \{1,\ldots, p\}.$$

Theorem \ref{theo:bic} derives the predictive log likelihood for small sample sizes inline with the BIC of \cite{Schwarz_BIC_1978}. It is not difficult to see this predictive model is asymptotically equivalent to the BIC. However, in small samples they behave differently. 

\begin{theorem}\label{theo:bic}
Let $\hat\bbeta$ be the maximum likelihood estimate of $\bbeta$ and $\mathcal{F}$ be the exponential family distribution. Suppose that the observed information matrix $J(\hat\bbeta)$ is positive definite and 
\begin{eqnarray*}
\y & \sim & \mathcal{F},\\
\bbeta & \sim & \N(\hat{\bbeta} , n \{J(\hat\bbeta)\}^{-1}).
\end{eqnarray*}
The predictive log likelihood  
$$\ell_j = \log \intinf \cdots \intinf f(\y \mid \bbeta , \X) d\pi(\bbeta\mid\X)$$
simplifies to 
\begin{eqnarray*}
\ell_j &=& \ell(\hat\bbeta) - \frac{m}{2} \log(n+1) + o(1).\end{eqnarray*}
\end{theorem}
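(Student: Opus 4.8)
The plan is to perform a Laplace approximation of the integral that defines $\ell_j$, exploiting the fact that the prior is Gaussian and centered exactly at the maximum likelihood estimate $\hat\bbeta$ with covariance $n\{J(\hat\bbeta)\}^{-1}$. First I would write the integrand as $\exp\{\ell(\bbeta)\}\,d\pi(\bbeta\mid\X)$, where $\ell(\bbeta)=\log f(\y\mid\bbeta,\X)$ is the log likelihood and $d\pi(\bbeta\mid\X)$ is the $\N(\hat\bbeta, n\{J(\hat\bbeta)\}^{-1})$ density. Combining the exponent of the likelihood with the exponent of the Gaussian prior, the total exponent is a function whose quadratic expansion around $\hat\bbeta$ I would compute: since $\hat\bbeta$ is the MLE, the gradient of $\ell$ vanishes there, so $\ell(\bbeta)\approx \ell(\hat\bbeta)-\tfrac12(\bbeta-\hat\bbeta)\t J(\hat\bbeta)(\bbeta-\hat\bbeta)$, while the log-prior contributes $-\tfrac12(\bbeta-\hat\bbeta)\t \tfrac1n J(\hat\bbeta)(\bbeta-\hat\bbeta)$ plus the normalizing constant $-\tfrac{m}{2}\log(2\pi) - \tfrac12\log\det\!\big(n\{J(\hat\bbeta)\}^{-1}\big)$.

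Next I would add the two quadratic forms: the combined curvature at $\hat\bbeta$ is $\big(1+\tfrac1n\big)J(\hat\bbeta) = \tfrac{n+1}{n}J(\hat\bbeta)$. Carrying out the resulting Gaussian integral over $\bbeta\in\real^m$ gives a factor $(2\pi)^{m/2}\det\!\big(\tfrac{n+1}{n}J(\hat\bbeta)\big)^{-1/2}$. Taking logarithms and collecting terms, the $(2\pi)^{m/2}$ factors cancel against the $-\tfrac{m}{2}\log(2\pi)$ from the prior normalizer, and the determinant terms combine as $-\tfrac12\log\det\!\big(\tfrac{n+1}{n}J(\hat\bbeta)\big) - \tfrac12\log\det\!\big(n\{J(\hat\bbeta)\}^{-1}\big) = -\tfrac12\log\det\!\big((n+1)\I_m\big) = -\tfrac{m}{2}\log(n+1)$, where the $\det J(\hat\bbeta)$ and the $n$ factors telescope exactly. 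This leaves $\ell_j = \ell(\hat\bbeta) - \tfrac{m}{2}\log(n+1)$ plus the error from the Laplace approximation.

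Finally I would argue that the approximation error is $o(1)$. The positive definiteness of $J(\hat\bbeta)$ assumed in the statement guarantees that the quadratic form is nondegenerate so the Laplace expansion is valid; the higher-order terms in the Taylor expansion of $\ell(\bbeta)$ beyond the quadratic contribute corrections that vanish as the effective posterior concentrates, standard for exponential-family likelihoods with a positive definite information matrix. I would note that because the prior is exactly Gaussian (not merely approximately so) the only source of error is the non-quadratic remainder of $\ell(\bbeta)$, which simplifies the bookkeeping considerably relative to a generic Laplace argument. The main obstacle is making the $o(1)$ claim rigorous for \emph{small} samples rather than merely asymptotically: one must control the cubic and higher remainder terms uniformly, which is where the exponential-family structure (so that $\ell$ has well-behaved derivatives and $J(\hat\bbeta)\succ 0$ controls the local geometry) does the real work. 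For a conference paper I would state this control at the level of the standard Laplace-approximation remainder bound and cite the exponential-family regularity, rather than reproving it.
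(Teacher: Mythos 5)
Your proposal is correct and follows essentially the same route as the paper: a second-order Taylor expansion of the log likelihood at the MLE (the Laplace step), combination with the exactly Gaussian prior to get curvature $\tfrac{n+1}{n}J(\hat\bbeta)$, and the determinant cancellation $-\tfrac12\log\det\bigl((n+1)\I_m\bigr)=-\tfrac{m}{2}\log(n+1)$. Your closing remarks on controlling the cubic remainder are, if anything, more careful than the paper's own treatment of the $o(1)$ term.
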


%See the Appendix for the proof.
\begin{proof}
Define $\ell(\bbeta)$ as the log-likelihood function of $\bbeta$ given $\y$ and $\X$. Using the second-order Taylor expansion at the maximum likelihood estimate $\hat \bbeta$, we have

\begin{align*}
\begin{split}
 \ell(\bbeta) &= \frac{1}{0!} \ell(\hat \bbeta) + \frac{1}{1!} \frac{\partial \ell(\bbeta)}{\partial \bbeta}  \mid _{\bbeta = \hat \bbeta} (\bbeta - \hat \bbeta)  + \\
& \frac{1}{2!} (\bbeta - \hat \bbeta)^\top \lbrace \frac{\partial^2 \ell(\bbeta)}{\partial \bbeta \partial \bbeta^\top} \rbrace \mid _{\bbeta = \hat \bbeta} (\bbeta - \hat \bbeta) + \mathcal{O}_p(||\bbeta - \hat \bbeta||^3),
\end{split}
\end{align*}

or equivalently,

\begin{align*}
\begin{split}
\ell(\bbeta) = \ell(\hat \bbeta) - \frac{1}{2} (\bbeta - \hat \bbeta)^\top \lbrace J(\hat \bbeta) \rbrace (\bbeta - \hat \bbeta) + o(1),
\end{split}
\end{align*}

Hence, by subtracting in the likelihood of $\mathcal{F}$, we have 
\begin{align}
\begin{split}
 f(\y \mid \bbeta, \X) &= \mathrm{exp}\{ \ell(\hat \bbeta)\} \times \\ 
& \exp \lbrace - \frac{1}{2} (\bbeta - \hat \bbeta)^\top \lbrace J(\hat \bbeta) \rbrace (\bbeta - \hat \bbeta) \rbrace + o(1).
\end{split}
\label{eq:ylike1}
\end{align}

Also, the prior distribution of $\bbeta$ is
\begin{align}
\begin{split}
 \pi(\bbeta \mid \X) &= \mid 2 \pi n \{J(\hat \bbeta)\}^{-1} \mid^{-\frac{1}{2}} \times \\
& \mathrm{exp} \lbrace -\frac{1}{2n} (\bbeta - \hat \bbeta)^\top \lbrace \{J(\hat \bbeta)\}^{-1} \rbrace^{-1} (\bbeta - \hat \bbeta)\rbrace.
\end{split}
\label{eq:betalike1}
\end{align}

Therefore, the product of \eqref{eq:ylike1} and \eqref{eq:betalike1} is then given by

\begin{align*}
\begin{split}
& f(\y \mid \bbeta, \X) \pi(\bbeta \mid \X) = \\
&\mathrm{exp} \lbrace \ell(\hat \bbeta) \rbrace \mid 2 \pi n \{J(\hat \bbeta)\}^{-1} \mid^{-\frac{1}{2}} \times\\
& \mathrm{exp} \lbrace -\frac{(n+1)}{2n} (\bbeta - \hat\bbeta)^\top \lbrace \{J(\hat \bbeta)\}^{-1} \rbrace^{-1} (\bbeta - \hat\bbeta)\rbrace + o(1).
\end{split}
\end{align*}

Now, by taking the integral with respect to $\bbeta$, the predictive likelihood simplifies to

\begin{align*}
\begin{split}
L_j &= \mathrm{exp} \lbrace \ell(\hat \bbeta) \rbrace \mid 2 \pi n \{J(\hat \bbeta)\}^{-1} \mid^{-\frac{1}{2}} \mid 2 \pi \frac{n}{n+1} \{J(\hat \bbeta)\}^{-1} \mid^{\frac{1}{2}} \\ &+ o(1)
\end{split}
\end{align*}

or equivalently,

$$L_j = (n+1)^{-\frac{m}{2}} ~ \mathrm{exp} \lbrace \ell(\hat \bbeta) \rbrace + o(1). $$

Finally, the predictive log likelihood is given by

$$ \ell_j =  \ell(\hat \bbeta) -\frac{m}{2} \mathrm{log}(n+1) + o(1).~ \square$$ 

\end{proof}

\begin{corollary}
Suppose that $$ y_i \sim \mathcal{N}(\x_i^{\top}\bbeta, \sigma^2).$$
The predictive log likelihood simplifies to 

\begin{equation} \label{eq:bic}
\ell_j = \mathrm{constant} - \frac{1}{2 \sigma^2 }(\y-\hat\y)\t(\y-\hat\y) - \frac{m}{2} \log(n+1)
\end{equation}

where $\hat \y$ is the predicted response under dimension $m$
$$\hat\y = \X_j (\X_j\t \X_j)\inv \X_j \y.$$ 
\end{corollary}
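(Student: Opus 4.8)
The plan is to specialize Theorem~\ref{theo:bic} to the Gaussian likelihood and substitute the closed forms that this case provides. First I would write the log-likelihood of the model restricted to the $m$ selected columns $\X_j$ explicitly,
$$\ell(\bbeta) = -\frac{n}{2}\log(2\pi\sigma^2) - \frac{1}{2\sigma^2}(\y - \X_j\bbeta)\t(\y - \X_j\bbeta),$$
and identify the maximum likelihood estimate with the least squares estimate $\hat\bbeta = (\X_j\t\X_j)\inv\X_j\t\y$, so that $\X_j\hat\bbeta = \H_j\y = \hat\y$ with $\H_j$ the projection onto the column span of $\X_j$. Evaluating $\ell$ at $\hat\bbeta$ then gives $\ell(\hat\bbeta) = -\frac{n}{2}\log(2\pi\sigma^2) - \frac{1}{2\sigma^2}(\y-\hat\y)\t(\y-\hat\y)$.

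Next I would plug this into the conclusion $\ell_j = \ell(\hat\bbeta) - \frac{m}{2}\log(n+1) + o(1)$ of Theorem~\ref{theo:bic}. The term $-\frac{n}{2}\log(2\pi\sigma^2)$ does not depend on the candidate model index $j$ (equivalently on $m$), so it is absorbed into the additive constant, leaving exactly the displayed expression~\eqref{eq:bic}. I would also verify that the hypotheses of Theorem~\ref{theo:bic} are met in this case: the observed information matrix is $J(\hat\bbeta) = \sigma^{-2}\X_j\t\X_j$, which is positive definite precisely when $\X_j$ has full column rank, and the prior $\bbeta \sim \N\big(\hat\bbeta,\, n\sigma^2(\X_j\t\X_j)\inv\big)$ inherits the $(\X_j\t\X_j)\inv$ scaling of a $g$-prior, a remark worth making.

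One point I would emphasize is that in the Gaussian case the argument is not merely asymptotic: $\ell(\bbeta)$ is an exact quadratic in $\bbeta$, so the second-order Taylor expansion used in the proof of Theorem~\ref{theo:bic} is exact, the third-order remainder vanishes identically, and the Gaussian integral over $\bbeta$ is evaluated in closed form. Consequently the $o(1)$ term is exactly zero and the corollary holds with equality for every finite $n$, which is the property that makes $\ell_j$ usable as a BIC-like selection score in the small-sample AL regime.

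There is essentially no serious obstacle here; the only care needed is bookkeeping of the constants (the $2\pi$, $n$, and $\sigma^2$ factors coming from the two Gaussian normalizing constants) and being explicit that the ``constant'' in~\eqref{eq:bic} is model-independent, so that maximizing $\ell_j$ over $j$ reduces to trading the residual sum of squares $(\y-\hat\y)\t(\y-\hat\y)$ against the penalty $\frac{m}{2}\log(n+1)$. If $\sigma^2$ is treated as unknown, a brief note that it may be replaced by a consistent estimate without affecting this leading behaviour would round out the statement.
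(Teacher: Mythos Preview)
Your proposal is correct and follows essentially the same approach as the paper: specialize Theorem~\ref{theo:bic} to the Gaussian likelihood, plug in $\ell(\hat\bbeta)$ in residual-sum-of-squares form, and note that the quadratic log-likelihood makes the Taylor expansion exact so the $o(1)$ term vanishes. Your write-up is in fact more thorough than the paper's, which records only the exactness remark and the comment that $\sigma^2$ is irrelevant for maximization.
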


In case of $\y$ having a Gaussian distribution, the Taylor expansion is exact since $ \mathcal{O}_p(||\bbeta - \hat \bbeta||^3) = 0$. The positive constant $2\sigma^2$ is unknown in practice, and does not play a role in maximization.  

\subsection{Linear query generation}
\label{sect:linearquery}

Here we focus on linear models for query generation. Linear models are attractive because the class of  linear models including main effects with interactions cover any complex function on discrete features. We start with a linear model with main effects only (and no interaction) to create an extremely fast query generation, called \emph{query-by-sign}. Then generalize it to a linear model with main effects and pair-wise interactions to trade-off some computation for better accuracy. We call this method \emph{query-by-variance}. However, the model may include higher order significant interactions. We address this issue by using bagged trees to produce \emph{query-by-bagging}.

In AL context, the objective is to request a new observation that most improves the model performance. There are two major paradigms to interpret model performance; i) smaller variance of prediction $\hat\y$, and ii) smaller variance of estimators $\hat\bbeta$. Here we take the former approach as it makes more sense for the EDFA application, and focus on improving prediction accuracy as the objective.

In an AL setting, at each iteration, a new data point $\xs_{1 \times p}$ is requested, and after observing its response variable $y(\xs)$ the training set is updated. Therefore, we use the notation $\hat\bbeta(\xs)$ to emphasize that this new $\hat\bbeta$ is estimated after adding this new observation to the previously observed design matrix $\X_{n \times p}$. 

The new design matrix, after adding the new observation is
%%This framework encourages to add a new observation $\xs$ to the design matrix  
$$\Xs_{(n+1)\times p}=\left[
\begin{array}{c}
\X_{n\times p} \\ 
\xs_{1\times p}
\end{array}
 \right],$$

Note that $\X$ is the training data already observed, and variance is a function of the new observation $\xs$ only. To improve prediction accuracy, we query a new observation $\xs$ under which the model prediction has the largest uncertainty $\V\{\hat y(\xs)\}$. From \emph{conditional variance theorem} \cite{sheldon2002first} 
\begin{align*}
\V\{\hat y(\xs)\}= \E_{\xs}[\V\{\hat y(\xs)\mid \xs \}]  + \V_{\xs}[ \E \{\hat y (\xs) \mid \xs\}].
\end{align*}
Since $\xs$ is a query and under our control, this simplifies to
$$\V\{\hat y(\xs)\}= \V\{\hat y(\xs)\mid \xs \}.$$

From linear model assumption the response variance  $\V\{ y(\xs)\mid \xs \}$ is constant $\sigma^2$. Note that the response variance is different from the prediction variance, i.e. $$\V\{ y(\xs)\mid \xs \}\neq \V\{ \hat y(\xs)\mid \xs \}.$$ The response variance $\V\{ y(\xs)\mid \xs \}$ is estimated through the residual mean squares 
$$ \frac{1}{n-p} \r\t\r,$$ 
where $n-p$ is the error degrees of freedom. The residual (or the training error) is $\r=\hat\y-\y$. The prediction variance $\V\{ \hat y(\xs)\mid \xs \}$ requires more elaboration and depends on $\X$ and $\xs$. 

To maximize the prediction variance $\V\{ \hat y(\x)\mid \x \}$ one needs to keep the maximizer scale-invariant, otherwise any direction $\x$ with a large scale $c$ is solution because $\V(c\x)=c^2\V(\x)$.  
Suppose ${\xs}$ is of a fixed norm to avoid scaling, therefore 
\begin{equation}
\argmax\limits_{\xs} \V\{{\xs}\t\hat\bbeta\}=\argmax\limits_{\xs} \sigma^2 {\xs}\t(\X\t\X)\inv \xs,
\label{eq:maxvar}
\end{equation}
where $\sigma^2$ is a constant and can be ignored in maximization.

Equation \eqref{eq:maxvar} is key to active learning for linear models. Suppose the model dimension is estimated properly $m<p$, and $\xs$ is continuous $\xs\in\real^m$. The scale-invariant solution query generation requires maximizing \eqref{eq:maxvar} subject to a bounded norm $\xs\t\xs =c^2$. It is easy to see that such a maximizer has a closed form.

\begin{theorem}\label{theo:eigen}
Suppose $\X\t\X$ is positive definite, therefore $\hat{\xs}=c~ \emin$, where $\emin$ is the eigenvector associated with the smallest eigenvalue of $\X\t\X$ is the solution to the following optimization problem 
\begin{eqnarray}
 & \argmax\limits_{\xs} {\xs}\t(\X\t\X)\inv \xs, \label{eq:eigensol}  \\
& \mathrm{s.t.~~}  \xs\t\xs=c^2. \label{eq:eigenconst} \nonumber 
\end{eqnarray}
\end{theorem}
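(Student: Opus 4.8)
The plan is to reduce this to the classical Rayleigh--Ritz characterization of extreme eigenvalues. First I would invoke the spectral theorem: since $\X\t\X$ is symmetric positive definite, write $\X\t\X = U\Lambda U\t$ with $U$ orthogonal and $\Lambda = \mathrm{diag}(\lambda_1,\ldots,\lambda_m)$, where $\lambda_1 \ge \cdots \ge \lambda_m = \lmin > 0$. Then $(\X\t\X)\inv = U\Lambda\inv U\t$, the columns of $U$ are the shared eigenvectors of $\X\t\X$ and $(\X\t\X)\inv$, and in particular the last column is $\emin$.

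Next I would perform the orthogonal change of variables $\z = U\t\xs$. Because $U$ is orthogonal, the constraint is preserved, $\xs\t\xs = \z\t\z = c^2$, and the objective becomes $\xs\t(\X\t\X)\inv\xs = \z\t\Lambda\inv\z = \sum_{i=1}^m z_i^2/\lambda_i$. Since $1/\lambda_i \le 1/\lmin$ for every $i$, this gives $\sum_i z_i^2/\lambda_i \le (1/\lmin)\sum_i z_i^2 = c^2/\lmin$, with equality exactly when $\z$ places all of its mass on the coordinate(s) attaining $\lambda_i = \lmin$. When $\lmin$ is simple this forces $\z = \pm c\, e_m$, i.e. $\xs = \pm c\, \emin$, which is the claimed maximizer, with optimal value $c^2/\lmin$.

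As the ``why an eigenvector'' half of the argument I would also record the Lagrange-multiplier computation: any stationary point of $\xs\t(\X\t\X)\inv\xs - \mu(\xs\t\xs - c^2)$ satisfies $(\X\t\X)\inv\xs = \mu\xs$, so $\xs$ must be an eigenvector of $(\X\t\X)\inv$ (equivalently of $\X\t\X$), and the objective there equals $\mu c^2$; maximizing over the eigenvalues $\mu$ of $(\X\t\X)\inv$ selects the largest, $\mu = 1/\lmin$, with eigenvector $\emin$. This also makes transparent the earlier remark about scale invariance: the norm constraint $\xs\t\xs = c^2$ is precisely what prevents the objective from being driven to infinity along a ray.

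There is no serious obstacle here; the statement is textbook linear algebra. The only points meriting a sentence of care are (i) that positive definiteness of $\X\t\X$ guarantees $\lmin > 0$, so $(\X\t\X)\inv$ exists and the reciprocals $1/\lambda_i$ are well defined, and (ii) that the maximizer is unique only up to sign, and more generally only up to the choice of unit vector within the $\lmin$-eigenspace if that eigenvalue is degenerate; in the AL application $\X\t\X$ is generically non-degenerate, so this caveat does not bite.
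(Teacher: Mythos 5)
Your proof is correct and follows essentially the same route as the paper's: an orthogonal diagonalization reducing the constrained problem to a Rayleigh quotient, the bound $\sum_i z_i^2/\lambda_i \le c^2/\lmin$ with equality on the extreme eigenvector, differing only in that you diagonalize $\X\t\X$ while the paper diagonalizes $(\X\t\X)\inv$ directly. The supplementary Lagrange-multiplier remark and the caveats about $\lmin>0$ and uniqueness up to sign are sound additions but do not change the substance.
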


\begin{proof}
Suppose $\X\t\X$ is positive definite and let $\y = c \inv \xs = (c \inv \xs_1, \ldots, c \inv \xs_{\hat p})$. Therefore, the constraint $\xs^\top \xs= c^2$ is equivalent to $\y^\top \y = 1$. The optimization problem reduces to 
$$ \argmax_{\y} {\y}\t(\X\t\X)\inv \y ~~ \mathrm{s.t.} ~~ \y\t\y = 1,$$

or equivalently for $\y \neq \mathbf{0}$, to

$$ \argmax_{\y} \frac{{\y}\t(\X\t\X)\inv \y}{{\y}\t\y}.$$

Let $\mathbf{Q}$ be the orthogonal matrix whose columns are the eigenvectors of $\mathbf{A} = (\X^\top \X)^{-1}$ and $\mathbf{D}(\lambda)$ the associated eigenvalues diagonal matrix. Suppose that the eigenvalues are ordered such as $\lambda_1 \geq \lambda_2 \geq \ldots \geq \lambda_{\hat p}$. Let $\mathbf{A}^{1/2} = \mathbf{Q} \mathbf{D}(\lambda)^{1/2} \mathbf{Q}^\top$ and $\mathbf{z} = \mathbf{Q}^\top \y$. Therefore, for $\mathbf{z} \neq \mathbf{0}$, 

\begin{align}
\begin{split}
& \frac{{\y}\t(\X\t\X)\inv \y}{{\y}\t\y} = \frac{{\y}\t \mathbf{A}^{1/2} \mathbf{A}^{1/2} \y}{{\y}\t \mathbf{Q} \mathbf{Q}^\top \y} = \frac{\mathbf{z}^\top \mathbf{D}(\lambda) \mathbf{z}}{\mathbf{z}^\top \mathbf{z}} \\
& = \frac{\sum_{j=1}^{\hat p} \lambda_j z_j^2}{\sum_{j=1}^{\hat p} z_j^2} \leq \lambda_1 \frac{\sum_{j=1}^{\hat p} z_j^2}{\sum_{j=1}^{\hat p} z_j^2} = \lambda_1 .
\label{eq:cond1eigen}
\end{split}
\end{align}

Now, for $\y = \emax$, the eigenvector associated with $\lambda_1$, the largest eigenvalue of $(\X\t\X) \inv$, we have 

$$ \mathbf{z} = \mathbf{Q}^\top \emax = (1, 0, \ldots, 0)^\top, $$

because $ \mathbf{e}_j^\top \emax = 1$ for $j=1$ and $0$ otherwise. Hence for this choice of $\y$, we have

\begin{align}
\frac{\emax\t(\X\t\X)\inv \emax}{\emax\t \emax} = \frac{\mathbf{z}^\top \mathbf{D}(\lambda) \mathbf{z}}{\mathbf{z}^\top \mathbf{z}} = \frac{\lambda_1}{1} = \lambda_1.
\label{eq:cond2eigen}
\end{align}

By \eqref{eq:cond1eigen} and \eqref{eq:cond2eigen}, it is straightforward to see that $\hat \y = \emax$ and since $\y = c \inv \xs$, we have $\hat \xs = c ~ \emax$. Consequently $\hat \xs = c ~ \emin$ where this time $\emin$ is the eigenvector associated with the smallest eigenvalue of $\X\t\X$. $\square$
\end{proof}

The computational cost of this solution is $O(m^2)$, which is quite fast for small $m$. The application of \eqref{eq:eigensol} is not restricted to continuous feature space. Suppose that the feature space is binary $\xs \in \{-1,+1\}^{m},$ then $\xs\t\xs = m$ and a relaxed approximate solution is
\begin{equation}
\hat{\xs} = \sign(\emin).
\end{equation}
The linear model is sparse, so the dimension of $\xs$ is negligible ($m \ll p$). The brute force maximizes the objective by trying all $2^{m}$ possible values, so it is combinatorially large. However, for small $m \leq 18 $, exhaustive search is computationally feasible.

\subsection{Ensemble-based query generation}
\label{sect:ensemble}

In many applications the prediction function is a nonlinear function. While a linear model helps to identify important features, they are  not accurate for prediction purpose. As a consequence, an inaccurate prediction model leads to generating suboptimal queries. The model may contain even more than the second order interactions, or the linear model variance assumption $\V(y(\x)) = \sigma^2$ might be wrong. 
In this section we address both issues by fitting a flexible ensemble tree on the sparse features, and relax assumption the constant variance assumption by computing the empirical variance of the prediction. Among many variants of ensemble methods we propose bagging, because empirical  estimation of the variance $\V(y(\x))$ is straightforward. 

Bagging \cite{breiman1996bagging} is a method for fitting an ensemble of learning algorithms trained on bootstrap replicates of the data in order to get an aggregated predictor. Suppose that $B$ bootstrap replicates are sampled from the observed $n$ independent data 
$$(\x_i, y_i), \quad i=1,\ldots, n,$$ 
and for $b = 1, \ldots, B,$ a regression tree $T_b$ is fitted. Therefore the response prediction is
$$\hat \y = \frac{1}{B} \sum_b \hat \y_b, $$
where $\hat \y_b = \hat T_b(\x)$ is the prediction output of a single tree. Hence, the prediction variance $\V\{\hat y(\xs)\mid \xs \}$ is estimated by the empirical variance 
$$ \hat\V\{ \hat y(\xs)\mid \xs \} = \frac{1}{B-1} \sum_b \lbrace \hat y_b(\xs) - \hat y(\xs) \rbrace ^2.$$

In the context of active learning, the query-by-bagging suggests ${\hat \xs}$ that maximizes the empirical variance such as 
\begin{equation}
\hat \xs = \argmax\limits_{\xs} \hat\V\{ \hat y(\xs)\mid \xs \}.
\label{eq:empvar}
\end{equation}

\section{Experimental Analysis}
\label{sect:experimental}

We divide our experimental analysis into two subsections; simulated data, discussed in Section~\ref{sect:simulations}, and the real-world EDFA application, discussed in Section~\ref{sect:application}.

\subsection{Simulations}\label{sect:simulations}

Here we conduct a simulation study to assess the performance of the three proposed active learning methods: \emph{query-by-sign}, \emph{query-by-variance} and \emph{query-by-bagging}. Each method has its associated fitted model; a linear model using main effects only for \emph{query-by-sign}, a linear model using main effects and second order interaction terms for \emph{query-by-variance}, and an ensemble of bagged trees for \emph{query-by-bagging}. We compare the three different query generation strategies against random sampling. We evaluate the performance of these methods by varying the complexity of the simulated data, see Table \ref{tab:queries} for a summary.

\begin{table}[htb!]
\caption{Summary of the proposed query strategies.}
\label{tab:queries}
\centering
{\small
    \centering
    \begin{tabular}{c c c c}
    & Query by & Query by & Query by \\
    & Sign & Variance & Bagging  \\
    \hline
    Modeling & \\
    \hline 
    Used Effects & main &  main + & bagged tress \\
    & & pair-wise & \\
    Ordering & lar & sure, lar & lar \\
    Selection& BIC & BIC & BIC \\
    \hline
    Sampling \\
    \hline
    Used Effects & main & main + & main \\
    & & pair-wise & \\
    Optimization & $\sign(\emin)$ & $\V(\hat y (\xs))$  & $\hat{\V}(\hat{y}(\xs) )$ \\
    \hline
\end{tabular}}

\end{table}

An active learning algorithm in our experiments has the following components: a) A training set $D$, split into labeled ($D_L$) and unlabeled ($D_U$) pools, b) A validation set $V$, c) A sampling budget $T$, and d) Features selection update frequency.

For each scenario, we generate $100$ independent observations for the labeled pool ($D_L$), and $20000$ independent observations for the unlabeled pool ($D_U$). As mentioned earlier, $D_U$ is the pool that responds to the queries by providing the label $y_i$ for the data point $\x_i$. Sampling budget ($T$) is usually set at $1000$, and finally, a set of $2000$ labeled data points is generated for validation purpose ($V$). We use the RMSE for comparing different active learning strategies.

% $$ \mathrm{RMSE} = \sqrt{\frac{1}{n} \sum_{i=1}^n (\hat y_i - y_i)^2}.$$

Data is generated with the following specifications:

\begin{enumerate}
\item Draw $(x_1^\star, \ldots, x_p^\star)$ a vector of random variables from a Bernoulli distribution $\mathrm{Ber}(\theta)$,
\item For $j=1, \ldots, p$, let $x_j = 2 x_j^\star - 1$ so that the features space is $\lbrace -1,1 \rbrace^p$,
\item Let $\beta_0$ be a constant and define $(\beta_1, \ldots, \beta_p)$ as a vector of random variables with discrete probability distribution: $\mathrm{Pr} (\beta_j=-3) = \mathrm{Pr} (\beta_j=+3) = 0.5$ for a subset of size $k < p$ of features, and $\mathrm{Pr} (\beta_j=0) = 1$ for the remaining $p-k$ features.
\end{enumerate}

With $p$ features, the most complicated regression model generates a coefficient for the constant $\beta_0$, $p$ coefficients for main effects (features), ${p \choose 2}$ coefficients for second-order (pair-wise) interactions, ..., and one last coefficient for the full-interactions term. In our simulations, with only $k$ non-zero coefficients, ${k \choose 2}$ terms can be included for second-order interactions. Simulated data using this model will guarantee the usefulness of feature selection.

We consider three different scenarios. We assume a linear model $\y=\X \bbeta + \eps$, where the input matrix $\X$ can be composed of either; i) main effects and 2nd order interactions where we highlight the usefulness of query-by-variance  ii) main effects, 2nd and 3rd order interactions where we highlight the usefulness of query-by-bagging and iii) main effects only, where we highlight the usefulness of query-by-sign. 
For the remainder of this section, we fix $k=7$ and $p=40$. We simulate data based on generated models using 2nd and 3rd order interactions. Data generated including 2nd order interactions has an input dimension of $7 + {7 \choose 2} = 28$, and for 3rd order interactions the input dimension is $7 + {7 \choose 2}  + {7 \choose 3} = 63$.

All models are fitted with features selected by the \emph{lar} algorithm. Note that after each set of 50 new observations added to the labeled pool, the feature selection is repeated and the model is updated. The three active learning methods are as follows:

\begin{enumerate}
\item \textbf{query-by-sign}: Fit a linear model using only main effects (pre-selected by constrained \emph{lar}). Query the observation $\hat{\x} = \sign(\emin)$.

\item \textbf{query-by-variance}: Fit a linear model using main effects (pre-selected by constrained \emph{lar}) and the corresponding 2nd order interaction terms. Query the observation $\hat{\x}$ that maximizes the variance of $\hat \y$.

\item \textbf{query-by-bagging}: Fit $10$ bagged regression trees with maximum $10$ features (pre-selected by constrained \emph{lar}) in each tree. Query the observation $\hat{\x}$ that maximizes the empirical variance of $\hat \y$.
\end{enumerate}

In the first scenario data are simulated by a linear model using main effects and 2nd order interactions. Figure \ref{fig:scene1} illustrates the performance of the three different active learning strategies on this data. The query-by-sign (top left) fails because the fitted model only incorporates main effects, and hence is not an accurate approximation of the data. query-by-variance (top right) and query-by-bagging (bottom) active learning strategies outperform the random sampling strategy and eventually find the ``true'' model as the sampling budget increases, however, query-by-bagging finds the ``true'' model more smoothly.

\begin{figure}[tbh]
\centering
\includegraphics[width=0.3\textwidth]{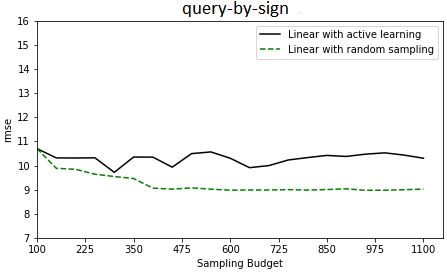}
\includegraphics[width=0.3\textwidth]{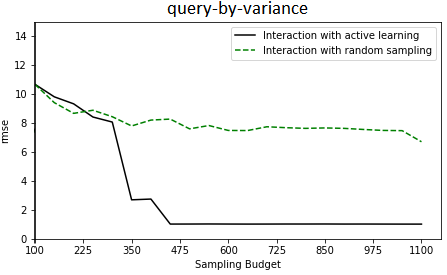}
\includegraphics[width=0.3\textwidth]{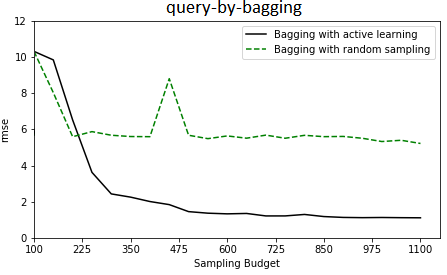}
\caption{Validation RMSE on data simulated by a linear model with main effects and 2nd order interactions using three different AL strategies; query-by-sign (top), query-by-variance (middle), and query-by-bagging (bottom).}
\label{fig:scene1}
\end{figure}

In the second scenario data is simulated by a linear model using main effects, 2nd and 3rd order interactions. Observing the failure of query-by-sign for the less complex data of first scenario, we compare only the query-by-variance to the query-by-bagging methods in this scenario. Figure \ref{fig:scene2} illustrates the results. When the 3rd order interaction terms are added to the simulated model, the query-by-variance (left panel) fails compared to the random sampling strategy. This suggests query-by-variance needs to be adjusted if significant 3rd order interactions are present in the model. However, query-by-bagging (right panel), outperforms the random sampling strategy by a large margin.

\begin{figure}[tbh]
\begin{center}
\includegraphics[width=0.3\textwidth]{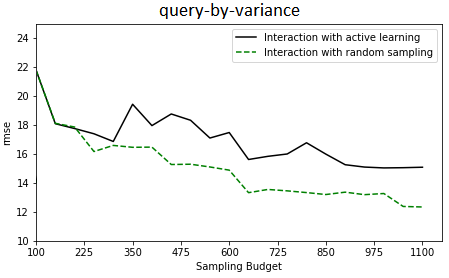}
\includegraphics[width=0.3\textwidth]{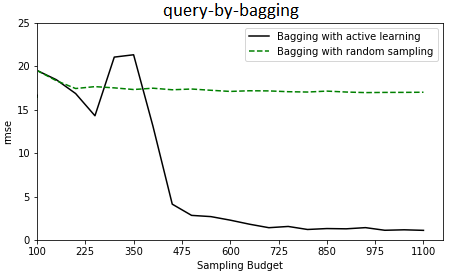}
\end{center}
\caption{Validation RMSE on data simulated by a linear model with main effects, 2nd and 3rd order interactions using two different AL strategies; query-by-variance (top) and query-by-bagging (bottom).}
\label{fig:scene2}
\end{figure}

%\begin{comment}
%Query-by-bagging outperforms all other competitors for simulated data based on 2nd or 3rd order interactions. Query-by-sign is, however, designed to perform well when the simulated model is linear with no interaction terms. Indeed, query-by-sign is an approximation of the query-by-variance for a linear model with main effects only. see Figure \ref{fig:linear}.

%\begin{figure}[tbh]
%\begin{center}
%\includegraphics[width=0.23\textwidth]{fig/Sim_1st_order_sign_exploration.png}
%\includegraphics[width=0.23\textwidth]{fig/Sim_2nd_order_sign_exploration.png}
%\end{center}
%\caption{Validation RMSE when the simulated data is generated with only main effects, the query-by-sign strategy outperforms the random sampling (left). However, it fails when the complexity of the simulated model increases (right).}
%\label{fig:linear}
%\end{figure}
%\end{comment}

Table~\ref{tab:runtime} summarizes the run time  for the three  proposed methods. As expected query-by-sign is the strategy with least computational cost and the fastest method. Query-by-variance is more computationally expensive, and query-by-bagging is the most expensive among the three methods. Times reported is the time needed (in seconds) to generate 200 queries with no model update during the query generation. The models are fixed to use 5 features only. Experiments are performed on a laptop with a 2.6 GHz CPU.

\begin{table}[htb!]
\caption{Run time (in seconds) for generation of 200 queries.}
\label{tab:runtime}
\centering
{\small
    \centering
    \begin{tabular}{c c c c}
    & Query by & Query by & Query by \\
    & Sign & Variance & Bagging  \\
    \hline
    Training size & \\
    \hline
    1000 & 2.45 & 5.68 & 6.69 \\
    2000 & 2.54 & 5.57 & 7.19 \\
    3000 & 2.84 & 6.22 & 7.94 \\
    4000 & 3.74 & 7.31 & 9.02 \\
    5000 & 4.09 & 8.13 & 11.01 \\
    \hline
\end{tabular}}

\end{table}

\subsection{Application}
\label{sect:application}

Here We apply our query generation methods to the data collected from the optical amplifier equipment (EDFA).

Our data set contains about $9000$ observations for an EDFA device with $40$ channels. We split the data set into a training set, a validation set, and a test set with $4500$, $2250$, and $2250$ observations, respectively. We further split the training set into a labeled pool of $100$ observations, and an unlabeled pool of $4400$ observations. Sampling budget is $1000$. 

% For channel output $y_c$, $c \in \{1, \ldots, C\}$ (where $C$ is the maximum number of channels in EDFA device) we make sure the channel input $x_c$ is automatically included in the selected features, but all other features are selected using the constrained \emph{lar} algorithm. The results of the three active learning methods are presented in Figure \ref{fig:edfa1}.

% \begin{figure}[tbh]
% \includegraphics[width=0.23\textwidth]{}
% \includegraphics[width=0.23\textwidth]{}
% \includegraphics[width=0.23\textwidth]{}
% \caption{Validation set RMSE on EDFA data set. Query-by-sign (top left), query-by-variance (top right), and query-by-bagging (bottom).}
% \label{fig:edfa1}
% \end{figure}

% None of the methods seem to be outperforming random sampling drastically, but a closer look reveals that the query-by-sign has the largest margin from the random sampling in the training set RMSE, the query-by-bagging has the smallest RMSE for validation set (query-by-sign: $0.105$, query-by-variance: $0.095$, and query-by-bagging: $0.09$)

There is a trade-off between maximum number of features to include in the bagging ensemble and the update frequency of feature selection. Using a large number of features in the model renders query generation computationally expensive, and therefore requires a less frequent feature selection update. By keeping the maximum number of features in the model small, we can generate queries faster, and update the features selection more often. For example, if only 18 features are used for bagging and query generation, and features selection is performed every 10 iterations, we can achieve performance observed in left side of Figure \ref{fig:edfa2}. Note that the final validation RMSE has dropped to $0.085$ which is enough to save multiple hours of engineers' time for collecting labeled data. On the right side of this figure we can further observe the increasing model size as more and more observations are queried by the active learning. Although the model can add more useful features or drop less useful ones at each model update step (as can be seen from the oscillating model size graph), the model using active learning strategy takes more advantage of this freedom compared to the random sampling strategy, and reaches the maximum number of features allowed to index for modeling and query generation (i.e. $18$.) The performance of AL strategy increases as model size upper bound increases to 20 or higher, but this comes with a computational cost.

\begin{figure}[tbh]
\begin{center}
\includegraphics[width=0.3\textwidth]{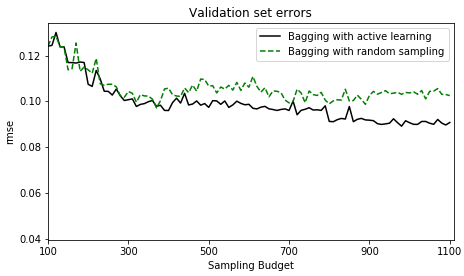}
\includegraphics[width=0.31\textwidth]{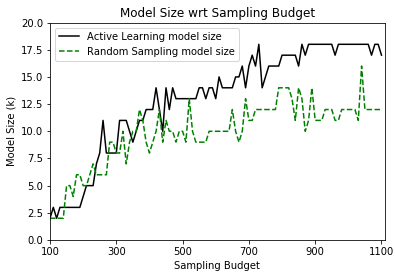}
\end{center}
\caption{Validation RMSE on EDFA data using query-by-bagging (top). The estimated model size as the number of samples increases (bottom).}
\label{fig:edfa2}
\end{figure}

\section{Conclusion}
\label{sect:conclusion}

Active learning helps to make better use of limited labeling budget by integrating data selection process into the learning algorithm. We proposed three different active learning strategies with different computational costs and running time requirements. The simplest strategy, query-by-sign, only considers main effects of a linear model for query generation. Query-by-variance takes advantage of second-order interactions, and query-by-bagging considers high-order interactions by using an ensemble of trees to model data and generate the queries. We simulated data using models with second or third order interactions, and compared the three different active learning strategies. We then applied our findings to EDFA data, a very small and highly complex data set. We observed that query-by-bagging, when tuned properly, improves the model prediction performance and saves engineers' data collection time. Also, the simpler sampling strategy, query-by-variance, displays interesting results, but on data sets with less main effect interactions.

% \section*{Acknowledgment}
% Authors would like to thank Yanhui Geng and Zhitang Chen for their collaboration, support and input for this work.

\bibliography{main}

\end{document}